\newtheorem{thm}{Theorem}[section]
\newtheorem{lem}[thm]{Lemma}
\newcommand{\be}{\begin{equation}}
\newcommand{\ee}{\end{equation}}
\newcommand{\bd}{\begin{displaymath}}
\newcommand{\ed}{\end{displaymath}}
\newcommand{\secref}[1]{Section \ref{#1}}
\newcommand{\lemref}[1]{Lemma \ref{#1}}
\newcommand{\thmref}[1]{Theorem \ref{#1}}
\long\def\symbolfootnote[#1]#2{\begingroup%
\def\thefootnote{\fnsymbol{footnote}}\footnote[#1]{#2}\endgroup}
\newcommand{\bfE}{\textbf{E}}
\newcommand{\Prob}{\Pr}
\newcommand{\bfB}{\textbf{B}}
\newcommand{\event}{\mathfrak{E}}
\newcommand{\junk}[1]{}
\begin{document}
\title{Exact phase transition of backtrack-free search with implications on
the power of greedy algorithms}
\author{Liang Li$^*$ 
\and Tian Liu\thanks{Key Laboratory of High Confidence Software
Technologies (Peking University), Ministry of Education, CHINA, and
Institute of Software, School of Electronic Engineering and Computer
Science, Peking University, Beijing 100871, China. Email:
powercuo@pku.edu.cn (for Liang Li), lt@pku.edu.cn (for Tian Liu)}
\and Ke Xu\thanks{National Lab of Software Development Environment,
School of Computers, Beihang University, Beijing 100083, China.
Email: kexu@nlsde.buaa.edu.cn}
    }
\maketitle
\begin{abstract}

Backtracking is a basic strategy to solve constraint satisfaction
problems (CSPs). A satisfiable CSP instance is backtrack-free if a
solution can be found without encountering any dead-end during a
backtracking search, implying that the instance is easy to solve. We
prove an exact phase transition of backtrack-free search in some
random CSPs, namely in Model RB and in Model RD. This is the first
time an exact phase transition of backtrack-free search can be
identified on some random CSPs. Our technical results also have
interesting implications on the power of greedy algorithms, on the
width of superlinear dense random hypergraphs and on the exact
satisfiability threshold of random CSPs.

\end{abstract}

\newpage

\section{Introduction}

In constraint satisfaction problems (CSPs), values are assigned to
variables to fulfil constraints among these variables
\cite{D1992b,RvBW2006}.
Backtracking is a basic strategy to solve CSPs
\cite{DP1960,DLL1962,GB1965,BR1975,F1978,K1968}. A CSP instance is
called \emph{backtrack-free}, if we can always extend from scratch a
partial assignment to a solution without any reassignment (or
backtracking) along a linear ordering on variables, and at each
variable we only need to keep the extended partial assignment
compatible with these constraints among assigned variables, implying
that the instance is easy to solve \cite{F1982}.
In practice, backtrack-freeness is a very
desirable property in many applications
\cite{DAGJ1995,JHD1997,BCFR2004,H+2004,S+2004,BP2004}.
In theory, sufficient conditions and on random instances for
backtrack-freeness have been studied
\cite{F1982,D1992,vB1994,DAGJ1995,vBD1997,PG1997,JHD1997,KV2000,S2001,DFM2003}.
Here, we study backtrack-freeness from a theoretical point of views
along these two lines.

The sufficient conditions for backtrack-freeness on CSPs were given
by Freuder in terms of strong consistency and the width of
constraint graph \cite{F1982,F1985,F1988}, by van Beek and Dechter
in terms of local and global consistency \cite{D1992,vB1994},
constraint tightness and looseness \cite{vBD1997}, by Dakic et al in
terms of overlap of cliques in interval graph
representation\cite{DAGJ1995}, by Jackson et al in terms of
k-consistency and overlap in constraint graphs \cite{JHD1997}, by
Pang and Goodwin in terms of $\omega$-consistency and
tree-structured $\omega$-graph associated with constraint
hypergraphs \cite{PG1997}, and by Kolaitis and Vardi in terms of
$k$-locality \cite{KV2000}. Here, yet another sufficient condition
in terms of what we call \emph{vertex-centered consistency} and the
width of constraint hypergraph is given.

A non-zero probability of backtrack-freeness on random instances for
a range of parameter values was used by Smith to lower bound the
satisfiability threshold \cite{S2001}. Dyer, Frieze and Molloy
obtained a threshold for backtrack-freeness with respect to the
parameter of the domain size of binary CSPs with a linear number of
constraints \cite{DFM2003}. Here we identify an \emph{exact}
threshold of backtrack-freeness with respect to the density
parameter for non-binary CSPs
with a superlinear number of
constraints. This is the first time an exact phase transition of
backtrack-freeness can be identified on random CSPs. Before, the
exact phase transition results of algorithmic behaviors are rare and
mainly about resolution \cite{ABM2004,MS2007}.

Our proofs work by first showing a phase transition result about
variable-centered consistency and then
estimating the width of a random hypergraph by determining the
existence of specific $k$-cores. As far as we know, this is the
first $k$-core result on $k$-uniform hypergraphs with $rn\ln n$
hyperedges and $n$ vertices. In our case, the width increases
\emph{smoothly} with the density parameter, in sharp contrast to the
earlier $k$-core threshold results in literatures for sparse
hypergraphs
\cite{B1984,L1991,PSW1996,DFM2003,C2004,M2005,FR2003,FR2004,JL2006a,JL2006b,CW2006,R2008,
K2008a}.

Our results have implications on the power of greedy algorithms,
since below the backtrack-freeness threshold we can find a solution
in a \emph{greedy} manner for almost all instances, while above the
threshold we are forced to search with backtracking for almost all
instances, even for satisfiable instances. To this end, we define
the width of greedy algorithms.
Also, our results show that for Model RB/RD, the satisfiability
threshold and some local property threshold are linked tightly, so
we suggest that a similar link might exist for random $3$-SAT.

This paper is organized as follow. In \secref{sec:pre} we fix our
notations and give all necessary definitions and some known results.
In \secref{sec:csp} we show the exact phase transition of
backtrack-freeness. In \secref{sec:graph} we show results about
width and $k$-cores in random hypergraphs. In \secref{sec:open} we
discuss some implications of our results.

\section{Preliminaries} \label{sec:pre}

In constraint satisfaction problems (CSPs), a set of variables
$\{u_1, u_2, \cdots, u_n\}$ and a set of constraints
$\{C_1,C_2,\cdots,C_m\}$ are given for each instance. We call $n$
the input size and ratio $\frac{m}{n}$ the constraint density. Each
variable can take a value from a finite domain $\{1,2,\cdots,d\}$.
We allow $d$ to increase with $n$, say $d=n^\alpha$, where $\alpha$
is a constant. An assignment is a mapping from the variable set to
the domain and a partial assignment is a mapping from a variable
subset to the domain. Each constraint involves a subset of variables
and labels each partial assignment on these variables either as
compatible or incompatible, but not both. In so called $k$-CSPs,
each constraint involves $k$ variables. $2$-CSPs are also called
binary CSPs. An assignment compatible with all constraints is called
a solution. Instances with at least one solution are called
satisfiable, otherwise unsatisfiable.

In random CSPs, constraints are generated by a random process with a
small number of control parameters, leading to a probabilistic
distribution on all instances. In Model RB, given $n$ variables each
with domain $\{1,2,...,d\}$, where $d=n^\alpha$ and $\alpha>0$ is
constant, select with repetition $m=rn\ln n$ random constraints, for
each constraint select without repetition $k$ of $n$ variables,
where $k=2,3,4,...$, and select uniformly at random without
repetition $(1-p)d^k$ compatible assignments for these $k$
variables, where $0<p<1$ is constant. If in the last step above,
each assignment for the $k$ variables is selected with probability
$1-p$ as compatible independently, then it is called Model RD
(\cite{XL2000}). Model RB is asymptotically similar to Model RD just
as $G(n,M)$ is to $G(n,p)$, all asymptotic results should hold both
for Model RB/RD ( \cite{XL2000,XL2006} ). For simplicity, here we
only give proofs valid for Model RD and omit more complicated
calculations for Model RB. For Model RB/RD, not only exact
satisfiability thresholds can be identified \cite{XL2000} but also
the existence of many hard instances around the thresholds can be demonstrated 
both theoretically \cite{XL2006} and experimentally \cite{XBHL2007}.

\begin{thm} \label{thm:satisfiability-r} (\cite{XL2000}, Theorem 1)
Let $r_{cr}=-\frac{\alpha}{\ln (1-p)}$, where $\alpha>\frac{1}{k}$,
$0<p<1$ are constants and $k\ge \frac{1}{1-p}$. Then for a random
instance $\phi$ in Model RB/RD,
$$ \lim_{n \to \infty}\Prob(\phi \mbox{ is satisfiable }) =
\begin{cases}
 1   &  r<r_{cr},\\
 0   &  r>r_{cr}.
\end{cases}
$$
\end{thm}

\begin{thm} \label{thm:resolution} (\cite{XL2006}, Theorem 3)
Almost all instances in Model RB/RD have no tree-like resolutions of
length less than $2^{\Omega(n)}$ and no general resolutions of
length less than $2^{\Omega(n/d)}$.
\end{thm}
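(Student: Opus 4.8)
Since a resolution refutation exists only for unsatisfiable formulas, I read the statement as concerning the regime $r>r_{cr}$, where by \thmref{thm:satisfiability-r} almost all instances are unsatisfiable, and the claim then lower-bounds the length of \emph{any} refutation. I would first fix a CNF encoding of a Model RB/RD instance $\phi$: introduce Boolean variables $x_{i,v}$ ($1\le i\le n$, $1\le v\le d$) meaning ``$u_i=v$'', add for each $u_i$ an ``at-least-one-value'' clause $\bigvee_{v} x_{i,v}$, and turn each forbidden tuple of a constraint into a clause of width $k$. This yields a CNF $F_\phi$ on $N=nd$ Boolean variables, equisatisfiable with $\phi$ and of maximal initial width $\max(k,d)$. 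The plan is then the standard route: prove a linear lower bound on resolution \emph{width} and convert it through the size--width trade-off.

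The engine is the Ben--Sasson--Wigderson size--width trade-off: for tree-like refutations $S_T(F_\phi)\ge 2^{\,w-w(F_\phi)}$, and for general refutations $S(F_\phi)\ge\exp\bigl(\Omega((w-w(F_\phi))^2/N)\bigr)$, with $w=w(F_\phi\vdash\emptyset)$ the refutation width. Everything thus reduces to showing $w=\Omega(n)$ and that this dominates $w(F_\phi)$. Granting $w=\Omega(n)$ with $N=nd$, the tree-like bound becomes $2^{\Omega(n)}$ and the general bound becomes $\exp(\Omega(n^2/(nd)))=2^{\Omega(n/d)}$, matching the two claimed bounds.

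For the width bound I would use the semantic measure $\mu(C)$, the least number of constraints of $\phi$ that semantically imply the clause $C$. This $\mu$ is subadditive, equals $1$ on initial clauses, and equals $\mu_0$ on the empty clause, where $\mu_0$ is the size of a smallest unsatisfiable sub-instance; hence some clause $C^\ast$ in any refutation has $\mu(C^\ast)\in[\mu_0/2,\mu_0]$. Taking a minimal constraint set $S$ implying $C^\ast$, every CSP variable occurring \emph{uniquely} in $S$ (the boundary $\partial S$) is not pinned down by $S$ and so must be mentioned by $C^\ast$, giving $w\ge|\partial S|$. Two probabilistic inputs then close the bound: (a) a first-moment/union-bound argument that whp every sub-instance of fewer than $\mu_0=\Theta(n)$ constraints is satisfiable, drawing on the satisfiability analysis behind \thmref{thm:satisfiability-r}; and (b) a boundary-expansion estimate giving $|\partial S|=\Omega(|S|)=\Omega(n)$ for all $S$ with $|S|\le\mu_0$.

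The hard part is exactly (a) and (b) in the \emph{superlinear} regime $m=rn\ln n$ with growing domain $d=n^\alpha$. In the sparse regime the boundary expansion $|\partial S|=\Omega(|S|)$ is routine, but here a linear-sized $S$ already carries $k|S|=\Theta(n)$ incidences over only $n$ variables, so unique-neighbour expansion is no longer automatic; it has to be recovered from the largeness of the domain (each constraint forbids only a $p$-fraction of the $d^k$ tuples) together with a counting that keeps $\mu_0$ at $\Theta(n)$ rather than letting the high density inflate it. I would also check that $w=\Omega(n)$ dominates the initial width from the at-least-one-value clauses --- immediate when $\alpha<1$ since then $d=o(n)$, and otherwise to be absorbed into the encoding. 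These two dense-regime estimates, not the off-the-shelf size--width machinery, are where the real work lies.
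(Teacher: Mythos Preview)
The paper does not prove this theorem at all: it is stated in the Preliminaries (\secref{sec:pre}) as a background result quoted verbatim from \cite{XL2006}, with no proof or sketch given. There is therefore nothing in the present paper to compare your proposal against; the authors simply invoke the result to motivate that Model RB/RD produces hard instances, and then move on to their own contributions on backtrack-freeness.

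That said, your outline is the standard Ben--Sasson--Wigderson route and is essentially the strategy used in \cite{XL2006}. You have correctly flagged the genuine difficulties: in the superlinear regime $m=rn\ln n$ the unique-neighbour expansion for sets $S$ of size $\Theta(n)$ does not come for free, and the at-least-one-value clauses have width $d=n^\alpha$, which must be handled. In \cite{XL2006} the expansion step exploits that a typical variable of degree $\Theta(\ln n)$ still has many ``free'' extensions because the domain grows polynomially, and the argument is carried out directly in the CSP setting rather than via a naive Boolean encoding, which sidesteps the initial-width issue you raised. If you want to complete a self-contained proof, those are exactly the two places where you would need to do real work; the rest of your sketch is routine.
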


In graph theory, a hypergraph consists of some nodes and some
hyperedges. Each hyperedge is a subset of nodes. A hypergraph is
$k$-\emph{uniform} if every hyperedge contains exact $k$ nodes.
Every CSP has an underlying \emph{constraint (multi-)hypergraph}:
each variable corresponds to a node and each constraint corresponds
to a hyperedge in a natural way. The constraint hypergraphs of
random CSPs are random hypergraphs \cite{JLR2000}. The constraint
hypergraph of Model RB/RD, denoted by $HG(n,rn\ln n,k)$, is a random
$k$-uniform multi-hypergraph with $n$ nodes and $rn\ln n$
hyperedges, where $r$ is constant and $k=2,3,4,...$. Denote by $HG$
a random hypergraph from $HG(n,rn\ln n,k)$.

Let $\phi$ be an instance of CSPs. Let $u$ be a variable. Let $C$ be
a constraint involving $u$, where $C$ is called a $u$-constraint.
For any $u$, the total number of $u$-constraints is called the
\emph{degree} of $u$ and denoted as $deg(u)$. Let $C_u$ be a set of
$u$-constraints, where $C_u$ is called $u$-\emph{centered}. Denote
by $N_{C_u}$ the set of all variables involved in constraints in
$C_u$. Denote by $C_{\setminus u}$ the set of all constraints among
variables in $N_{C_u}\setminus \{u\}$. Denote by $T_{C_{\setminus
u}}$ the set of all partial assignments each compatible with all
constraints in $C_{\setminus u}$. Let $c$ be a partial assignment in
$T_{C_{\setminus u}}$. Let $v$ be a value to $u$. Denote by $c'$ the
partial assignment extending $c$ just with $u=v$.

Let $\pi$ be a linear ordering on variables in $\phi$, say
$u_1<u_2<\cdots < u_n$. Denote by $C_{u_i}^\pi$ the set of all
$u_i$-constraints such that all constraints in $C_{u_i}^\pi$ are
among $\{u_1,u_2,\cdots,u_{i}\}$. The \emph{width} of $u_i$ under
$\pi$ is just $|C_{u_i}^\pi|$. The width of $\pi$ is $\max_{i}
width(u_i)$, denoted by $width(\pi)$. The width of $\phi$ is
$\min_{\pi} width({\pi})$, denoted by $width(\phi)$. For constraint
hypergraphs, the degree and width can be defined in a similar way.
The width and the associated optimal linear ordering can be found
efficiently \cite{F1982,F1985,F1988,MT1996}. Moreover, the
\emph{linkage} of a hypergraph $HG$ is the minimum degree of all its
nodes, denoted by $linkage(HG)$. A $k$-\emph{core} of a hypergraph
is a nonempty maximal subgraph with minimum degree $k$. In
\cite{F1982}, it was essentially proved that the width of a
hypergraph is equal to the maximal linkage of its subgraphs.

Consider the following strategy to solve $\phi$. At step $1$, we put
an arbitrary value to $u_1$. Assume that after step $i-1$, we have a
partial assignment $c$ on $\{u_1,u_2,\cdots,u_{i-1}\}$ which is
compatible with all constraints among $\{u_1,u_2,\cdots,u_{i-1}\}$.
At step $i$, we find a value $v$ for $u_i$ such that, when $c$ is
extended with $u_i=v$, the resulting assignment $c'$ is compatible
to all constraints among $\{u_1,u_2,\cdots,u_{i}\}$. Such a $v$ is
called \emph{available}. When there are more than one available
$v$'s, we take an arbitrary one from them. Note that the only
requirement to $v$ is that, when $c$ is extended with $u_i=v$, the
resulting assignment $c'$ is compatible with all constraints among
$\{u_1,u_2,\cdots,u_{i}\}$. In fact, the only requirement for $v$ is
that $c'$ is compatible with all constraints in $C_{u_i}^\pi$. If at
each step $i$ ($1\le i \le n$), for every partial assignment $c$, we
can always find such a value $v$ for $u_i$, then we say that $\phi$
is backtrack-free under $\pi$. Otherwise, we say that $\phi$ is not
backtrack-free under $\pi$. If there is a $\pi$ such that $\phi$ is
backtrack-free under $\pi$, then we say that $\phi$ is
\emph{backtrack-free}.

If whenever $|C_u|\le t$ , then for every $c\in T_{C_{\setminus
u}}$, we can \emph{always} find a $v$ such that $c'$ is compatible
with all constraints in $C_u$ (that is, for all $C\in C_u$, $c'$ is
compatible with $C$), then we say that $u$ is
\emph{variable-centered $t$-consistent}. If every $u$ in an instance
is variable-centered $t$-consistent, then we call this instance
variable-centered $t$-consistent and $t$ is called the
\emph{critical size} of this variable-centered consistency.





Denote by $\bfE(X)$ the expectation of a random variable $X$,
 $\bfB(n,p)$ the binomial distribution,
 $\Prob(\event)$ the probability of event $\event$.
 An event $\event$ occurs \emph{with high probability}, or \textbf{whp},
 if $\lim_{n \to \infty}\Prob(\event)=1$.

\begin{lem}(\emph{Chernoff Bound})\cite{ASE2002,MR1996,JLR2000,MU2005}
 For a random variable $X$ with distribution $\bfB(n,\frac{\mu}{n})$ and
 $0<\epsilon<1$, we have
$\Prob(X\leq (1-\epsilon)\mu)\leq e^{-\mu\epsilon^2/2}$ and
$\Prob(X\geq(1+\epsilon)\mu)\leq e^{-\mu\epsilon^2/3}$,
 and for any $\mu_h > \mu,$
 $\Prob(X\geq(1+\epsilon)\mu_h)\leq e^{-\mu_h\epsilon^2/3}.$
 \end{lem}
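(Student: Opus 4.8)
The plan is to establish all three inequalities by the classical exponential-moment (Chernoff) method, which converts each tail estimate into an optimization over a free parameter in the moment generating function. First I would record that $X$ with distribution $\bfB(n,\frac{\mu}{n})$ has mean $\bfE(X)=\mu$ and, writing $X$ as a sum of $n$ independent Bernoulli summands, moment generating function $\bfE(e^{tX})=(1+\frac{\mu}{n}(e^{t}-1))^{n}$. Applying the elementary inequality $1+x\leq e^{x}$ to each of the $n$ factors yields the clean bound $\bfE(e^{tX})\leq e^{\mu(e^{t}-1)}$, valid for every real $t$; this is the single analytic input that drives all three estimates.

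For the upper tail I would apply Markov's inequality to $e^{tX}$ with $t>0$, giving $\Prob(X\geq(1+\epsilon)\mu)\leq e^{-t(1+\epsilon)\mu}\,\bfE(e^{tX})\leq \exp(\mu(e^{t}-1-t(1+\epsilon)))$. Choosing the minimizer $t=\ln(1+\epsilon)$ collapses this to $\exp(\mu(\epsilon-(1+\epsilon)\ln(1+\epsilon)))$, and the stated form $e^{-\mu\epsilon^{2}/3}$ follows from the numerical inequality $(1+\epsilon)\ln(1+\epsilon)-\epsilon\geq \epsilon^{2}/3$ on $(0,1)$. The lower tail is handled symmetrically by applying Markov to $e^{-sX}$ with $s>0$ and optimizing at $s=-\ln(1-\epsilon)$, which produces $\exp(-\mu(\epsilon+(1-\epsilon)\ln(1-\epsilon)))$; the inequality $\epsilon+(1-\epsilon)\ln(1-\epsilon)\geq \epsilon^{2}/2$ then delivers $e^{-\mu\epsilon^{2}/2}$.

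For the third inequality, with a general $\mu_{h}>\mu$, the key observation is monotonicity of the moment bound in the mean. Since the optimizing choice $t=\ln(1+\epsilon)$ is positive, $e^{t}-1>0$, so I would weaken $e^{\mu(e^{t}-1)}$ upward to $e^{\mu_{h}(e^{t}-1)}$ while keeping the threshold at $(1+\epsilon)\mu_{h}$. Repeating the upper-tail optimization verbatim, now with $\mu_{h}$ appearing both in the exponent and in the threshold, reproduces $\exp(\mu_{h}(\epsilon-(1+\epsilon)\ln(1+\epsilon)))\leq e^{-\mu_{h}\epsilon^{2}/3}$, which is exactly what is claimed.

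The routine steps, namely the Markov inequality, the moment generating function computation, and the choice of the optimal $t$, are entirely standard, so the main obstacle is verifying the two scalar inequalities on $(0,1)$. The lower-tail inequality is immediate: the difference $\epsilon+(1-\epsilon)\ln(1-\epsilon)-\epsilon^{2}/2$ vanishes together with its first derivative at $\epsilon=0$ and has second derivative $\epsilon/(1-\epsilon)\geq 0$, hence is convex and nonnegative. The upper-tail inequality is slightly more delicate, because $(1+\epsilon)\ln(1+\epsilon)-\epsilon-\epsilon^{2}/3$ is \emph{not} convex across all of $(0,1)$; here I would instead show that its derivative $\ln(1+\epsilon)-\tfrac{2}{3}\epsilon$ stays nonnegative on $(0,1]$ (it vanishes at $0$, rises until $\epsilon=\tfrac12$, then decreases but remains above its positive endpoint value $\ln 2-\tfrac23$), so the difference increases from $0$ and is therefore nonnegative throughout.
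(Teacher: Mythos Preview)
Your argument is correct and follows the standard exponential-moment route; the verification of the two scalar inequalities is handled carefully, including the non-convex upper-tail case. Note, however, that the paper does not actually prove this lemma: it is stated with citations to standard references \cite{ASE2002,MR1996,JLR2000,MU2005} and used as a black box, so there is no ``paper's own proof'' to compare against---your proposal simply supplies the textbook derivation that those references contain.
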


Finally, $f\ll g$ means $f=o(g)$ or $\lim_{n \to \infty}
\frac{f}{g}=0$. A useful inequality is $1-x<e^{-x}<1-x+o(x)$ for
small $x>0$.

\section{The exact threshold of backtrack-freeness} \label{sec:csp}

In this section we give the exact threshold of backtrack-freeness
for the Model RB and Model RD. We first give a sufficient condition
for backtrack-freeness.

\textbf{Note}: In this section, when we use $\phi$, $\pi$,
$C_u^{\pi}$, $u$, $C_u$, $N_{C_u}$, $C_{\setminus u}$,
$T_{C_{\setminus u}}$, $c$, $v$, $c'$ and $C$, we implicitly assume
that they adhere to the descriptions in \secref{sec:pre}.

\begin{thm} \label{thm:condition}
If $\phi$ is vertex-centered $width(\phi)$-consistent, then $\phi$
is backtrack-free.
\end{thm}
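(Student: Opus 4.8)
The plan is to fix the optimal linear ordering $\pi$ witnessing $width(\phi)=t$ and to show directly that the greedy extension strategy described in \secref{sec:pre} never encounters a dead-end under this particular $\pi$. Set $t=width(\phi)$ and choose $\pi$, say $u_1<u_2<\cdots<u_n$, with $width(\pi)=t$; by definition this means $|C_{u_i}^\pi|\le t$ for every $i$. I would then argue by induction on the step $i$ of the search, the loop invariant being that after step $i-1$ the partial assignment $c$ on $\{u_1,\ldots,u_{i-1}\}$ is compatible with all constraints among these variables. The base case ($i=1$, any value) is immediate, and as already observed in \secref{sec:pre} the only requirement for a value $v$ to be available at step $i$ is that the extension $c'$ be compatible with every constraint in $C_{u_i}^\pi$.

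The key reduction is to feed the set $C_{u_i}^\pi$ into the hypothesis of variable-centered $t$-consistency, taking $C_u=C_{u_i}^\pi$. Since every constraint in $C_{u_i}^\pi$ is a $u_i$-constraint lying among $\{u_1,\ldots,u_i\}$, each such constraint involves only $u_i$ together with variables from $\{u_1,\ldots,u_{i-1}\}$; hence $N_{C_{u_i}^\pi}\setminus\{u_i\}\subseteq\{u_1,\ldots,u_{i-1}\}$, and every constraint in $C_{\setminus u_i}$ is a constraint among $\{u_1,\ldots,u_{i-1}\}$. By the loop invariant, the restriction of $c$ to $N_{C_{u_i}^\pi}\setminus\{u_i\}$ is therefore compatible with all constraints in $C_{\setminus u_i}$, i.e. it lies in $T_{C_{\setminus u_i}}$.

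Because $|C_{u_i}^\pi|\le t$ and $\phi$ is variable-centered $t$-consistent, there exists a value $v$ for $u_i$ making $c'$ compatible with every constraint in $C_{u_i}^\pi$. Since those constraints touch no variable of $c$ outside $N_{C_{u_i}^\pi}\setminus\{u_i\}$, the same $v$ is available against the full partial assignment $c$, so $c'$ is compatible with all constraints among $\{u_1,\ldots,u_i\}$ and the search proceeds, restoring the invariant at step $i$. Running the induction through $i=n$ shows $\phi$ is backtrack-free under $\pi$, hence backtrack-free.

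I expect the main obstacle to be bookkeeping rather than a deep idea: one must carefully reconcile the two roles of the symbol $c$ --- the global partial assignment maintained by the search on $\{u_1,\ldots,u_{i-1}\}$ versus the local partial assignment on $N_{C_u}\setminus\{u\}$ appearing in the definition of $T_{C_{\setminus u}}$ --- and confirm that the value $v$ guaranteed by local consistency remains available against the global $c$. The containment $N_{C_{u_i}^\pi}\setminus\{u_i\}\subseteq\{u_1,\ldots,u_{i-1}\}$ is precisely what makes this reconciliation go through, and deriving it cleanly from the definition of $C_{u_i}^\pi$ is the one place where care is genuinely needed.
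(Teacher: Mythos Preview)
Your proposal is correct and follows exactly the same approach as the paper: pick an optimal ordering $\pi$ with $width(\pi)=width(\phi)$, so that $|C_{u_i}^\pi|\le width(\phi)$ for every $i$, and then invoke vertex-centered $width(\phi)$-consistency at each step. The paper's proof is in fact much terser than yours---it simply asserts the backtrack-freeness equivalence and says the consistency hypothesis ``guarantees that at each $u_i$ the partial assignment can be extended as desired''---so your explicit induction and the bookkeeping with $N_{C_{u_i}^\pi}\setminus\{u_i\}\subseteq\{u_1,\ldots,u_{i-1}\}$ spell out details the paper leaves implicit.
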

\begin{proof}
By definition of backtrack-freeness, clearly
$$
\phi \mbox{ is backtrack-free }\Leftrightarrow \exists \pi, \forall
u, \forall c\in T_{C^{\pi}_{\setminus u}}, \exists v, \forall C\in
C^{\pi}_u, c' \mbox{ is compatible with C}.
$$
By definition of width, there is a $\pi$ such that
$width(\phi)=width(\pi)$. Under $\pi$, for all $u_i$, $width(u_i)\le
width(\pi)=width(\phi)$. Then the vertex-centered
$width(\phi)$-consistency guarantees that at each $u_i$, the partial
assignment can be extended as desired by backtrack-free search.
\end{proof}

As a warm up, we upper bound the number of $u$-constraints  for any
$u$ as $O(\ln n)$.

\begin{lem}\label{lem:degree}
$\max_{u} deg(u)<(1+\sqrt{\frac{6}{kr}})kr\ln n$ \textbf{whp}.
\end{lem}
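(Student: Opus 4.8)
The plan is to observe that the degree of any fixed vertex is exactly binomially distributed, and then to combine the upper-tail Chernoff bound with a union bound over the $n$ vertices. First I would fix a vertex $u$ and examine a single constraint: in Model RD each constraint selects $k$ of the $n$ variables uniformly without repetition, so the probability that $u$ is among them is $\binom{n-1}{k-1}/\binom{n}{k}=\frac{k}{n}$. Because the $m=rn\ln n$ constraints are generated independently (constraints are chosen with repetition), $deg(u)$ is a sum of $m$ independent $\mathrm{Bernoulli}(k/n)$ indicators, hence $deg(u)\sim\bfB\!\left(rn\ln n,\tfrac{k}{n}\right)$ with mean $\mu=m\cdot\frac{k}{n}=kr\ln n$.

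Next I would match the claimed bound to the Chernoff parameter. Writing the target as $(1+\eps)\mu$ with $\mu=kr\ln n$ forces $\eps=\sqrt{6/(kr)}$, and then the quoted upper-tail bound yields
\[
\Prob\bigl(deg(u)\ge(1+\eps)\mu\bigr)\le e^{-\mu\eps^2/3}
=e^{-(kr\ln n)\cdot\frac{6}{kr}\cdot\frac13}=e^{-2\ln n}=n^{-2}.
\]
A union bound over the $n$ choices of $u$ then gives
\[
\Prob\Bigl(\max_{u}deg(u)\ge(1+\eps)\mu\Bigr)\le n\cdot n^{-2}=\tfrac1n\to 0,
\]
so $\max_u deg(u)<(1+\eps)\mu$ holds \textbf{whp}, which is exactly the assertion.

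I do not expect a genuine obstacle, as this is a warm-up estimate. The only points needing care are (i) confirming that $deg(u)$ is truly binomial with mean exactly $kr\ln n$, which relies on the independence of distinct constraints and on the per-constraint inclusion probability $k/n$; and (ii) checking that $\eps=\sqrt{6/(kr)}<1$ so that the Chernoff bound in the stated form is applicable, i.e.\ that $kr>6$. The constant $6$ in the statement is evidently chosen so that the Chernoff exponent equals $2\ln n$, leaving a factor $n^{-2}$ that beats the union bound over $n$ vertices with room to spare.
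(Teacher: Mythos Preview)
Your proposal is correct and essentially identical to the paper's proof: both identify $deg(u)\sim\bfB(rn\ln n,k/n)$, apply the upper-tail Chernoff bound with $\eps=\sqrt{6/(kr)}$ to get a per-vertex failure probability of $n^{-2}$, and finish with a union bound over the $n$ vertices. Your observation that the stated form of Chernoff requires $\eps<1$ (i.e.\ $kr>6$) is a valid caveat that the paper glosses over; since the lemma is only ever used to assert $\max_u deg(u)=O(\ln n)$, the precise constant is immaterial.
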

\begin{proof}
Since the total number of constraints is $rn\ln n$, every constraint
involves exactly $k$ vertices, and a given vertex appears in a
constraint with probability $\frac{k}{n}$, $deg(u)$ is a random
variable with binomial distribution $\bfB(rn\ln n, \frac{k}{n})$. By
Chernoff bound, for any $u$ we have $\Prob(deg(u)\geq ( 1 +
\sqrt{\frac{6}{kr}})kr\ln n )\leq \frac{1}{n^2}$. By Union bound, we
have $\Prob(\exists u, deg(u)\geq (1+\sqrt{\frac{6}{kr}})kr\ln n)
\le n\cdot \frac{1}{n^2}=\frac{1}{n}$, so $\Prob(\forall u,
deg(u)<(1+\sqrt{\frac{6}{kr}})kr\ln n) \ge 1-\frac{1}{n}$, that is,
$\max_{u} deg(u)<(1+\sqrt{\frac{6}{kr}})kr\ln n$ \textbf{whp}.
\end{proof}

Our main observation is that there is a threshold for density
parameter $r$ in Model RB/RD, such that below this threshold, almost
all instances are variable-centered consistent for some critical
size, while above this threshold, almost all instance are not
variable-centered consistent for another critical size. Happily, the
two critical sizes can be very close!

\begin{lem} \label{lem:consistent}
Let $r_{bf}=-\frac{\alpha}{k\ln (1-p)}$, where $\alpha>0$, $0<p<1$,
$k=2,3,4,...$ are constants. If $r<r_{bf}$,  $0 < \epsilon <
\min(\frac{r_{bf}-r}{r},\frac{1}{2})$ and $t=(1+\epsilon)kr\ln n$,
then $\Prob(\forall u, u$ is vertex-centered $t$-consistent $) \ge
1-e^{-n^{O(1)}}$.
\end{lem}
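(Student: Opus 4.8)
The plan is to bound the probability that \emph{some} vertex $u$ fails to be vertex-centered $t$-consistent and to show it is $e^{-n^{\Omega(1)}}$. First I would fix a vertex $u$, a candidate family $C_u$ of $s\le t$ constraints through $u$, and a candidate partial assignment $c$ on $N_{C_u}\setminus\{u\}$. For this single triple, $u$ ``fails'' precisely when every value $v\in\{1,\dots,d\}$ is ruled out, i.e.\ for each $v$ at least one $C\in C_u$ is violated by $c'$. The key observation is that in Model RD the compatibility of distinct tuples is decided independently, so conditioned on the hypergraph structure and on $c$, a fixed value $v$ survives all $s$ constraints with probability exactly $(1-p)^s$, and the survival events for different $v$ are independent since they involve disjoint tuples. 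Hence the number of available values has distribution $\bfB(d,(1-p)^s)$ and
$$
\Prob(\text{no available } v \mid c) \e (1-(1-p)^s)^d \leqs (1-(1-p)^t)^d \leqs e^{-d(1-p)^t},
$$
using $s\le t$ and $1-x\le e^{-x}$. Note I may range over \emph{all} partial assignments $c$ rather than only those in $T_{C_{\setminus u}}$, since dropping that restriction only enlarges the bound.

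Substituting $d=n^\alpha$ and $t=(1+\epsilon)kr\ln n$, and writing $\beta=-\ln(1-p)>0$, gives $d(1-p)^t=n^{\alpha-(1+\epsilon)kr\beta}$. Since $r_{bf}=\alpha/(k\beta)$, the exponent equals $k\beta\,(r_{bf}-(1+\epsilon)r)$, which is a positive constant $\gamma$ exactly because $\epsilon<\frac{r_{bf}-r}{r}$ forces $(1+\epsilon)r<r_{bf}$; this is where the hypotheses $r<r_{bf}$ and the upper bound on $\epsilon$ enter, while the positivity of $\gamma$ is all I need here. Thus each fixed triple $(u,C_u,c)$ fails with probability at most $e^{-n^\gamma}$.

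It then remains to union bound over $c$, over $C_u$, and over $u$. The number of admissible $c$ is at most $d^{|N_{C_u}\setminus\{u\}|}\le d^{t(k-1)}=e^{O((\ln n)^2)}$, which is negligible against $e^{-n^\gamma}$. The delicate step is the union over $C_u$: for fixed $u$ there are up to $2^{deg(u)}$ admissible families, and although \lemref{lem:degree} controls $deg(u)$ it does so only with probability $1-O(1/n)$, which is far too weak for the target $1-e^{-n^{O(1)}}$. The fix is to condition on the hypergraph structure $HG$, bound $\Prob(u\text{ fails}\mid HG)\le 2^{deg(u)}\,d^{t(k-1)}\,e^{-n^\gamma}$ pointwise, sum over the $n$ vertices, and only then take expectation over the structure, exploiting that $deg(u)\sim\bfB(rn\ln n,k/n)$ has moment generating function $\bfE\big[2^{deg(u)}\big]=(1+k/n)^{rn\ln n}=n^{kr(1+o(1))}$. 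By linearity of expectation this gives
$$
\Prob(\exists u \text{ not } t\text{-consistent}) \leqs n\, n^{kr(1+o(1))}\, d^{t(k-1)}\, e^{-n^\gamma} \e e^{-n^{\gamma}(1-o(1))},
$$
since every polynomial and $e^{O((\ln n)^2)}$ prefactor is swallowed by $e^{-n^\gamma}$, which is the claimed bound.

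The main obstacle, then, is precisely this union over the potentially exponentially many families $C_u$: a naive $2^{deg(u)}$ factor combined with the merely polynomial failure probability of the degree bound does not reach $1-e^{-n^{O(1)}}$. Replacing the too-weak high-probability degree control by the exact moment generating function estimate $\bfE[2^{deg(u)}]=n^{O(1)}$, and keeping the deterministic factors $d^{t(k-1)}$ and $e^{-n^\gamma}$ outside the expectation, is the one genuinely non-routine point; everything else is the Model RD independence computation and elementary estimates.
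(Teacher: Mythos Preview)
Your argument follows the paper's route: fix a triple $(u,C_u,c)$, use the Model RD independence to get $\Prob(\text{no }v\text{ available})=(1-(1-p)^{|C_u|})^d\le e^{-n^\gamma}$ with $\gamma>0$ coming from $(1+\epsilon)r<r_{bf}$, and then union bound over all triples. The only point of departure is how you count the families $C_u$. The paper simply invokes \lemref{lem:degree} to say $deg(u)=O(\ln n)$ \textbf{whp} and hence $2^{deg(u)}=e^{O(\ln n)}$, and plugs this into the union bound. You correctly observe that this high-probability degree control has failure probability $\Theta(1/n)$, which is formally too coarse for the stated $1-e^{-n^{O(1)}}$ conclusion, and your remedy---taking expectation over the hypergraph and using $\bfE[2^{deg(u)}]=(1+k/n)^{rn\ln n}=n^{O(1)}$---is a valid and clean fix. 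An even more elementary repair is available: since only $|C_u|\le t$ matters, the number of admissible $C_u$ is deterministically at most $\sum_{s\le t}\binom{m}{s}\le (rn\ln n)^t=e^{O((\ln n)^2)}$, which is already absorbed by $e^{-n^\gamma}$ without any appeal to the degree. In the paper's downstream use (\thmref{thm:backtrack-free-r}) only the \textbf{whp} conclusion is actually needed, so the discrepancy is harmless there; your version simply proves the lemma exactly as stated.
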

\begin{proof}
Given $u$, $C_u$, $c$, $v$, $C$ and $c'$ as described in
\secref{sec:pre} and only consider $C_u$'s with $|C_u|\le t$,
$$
u \mbox { is vertex-centered } t\mbox{-consistent} \Leftrightarrow
\forall C_u, \forall c, \exists v, \forall C, c' \mbox{ is
compatible with } C.
$$
Under the distribution on random instances of Model RD, we have
\begin{eqnarray*}
\Prob( c' \mbox{ is compatible with } C) &=& 1-p,\\
\Prob(\forall C, c' \mbox{ is compatible with } C) &=& (1-p)^{|C_u|},\\
\Prob(\exists C, c' \mbox{ is incompatible with } C) &=& 1-(1-p)^{|C_u|},\\
\Prob(\forall v, \exists C, c' \mbox{ is incompatible with } C) &=&
(1-(1-p)^{|C_u|})^d.
\end{eqnarray*}
To apply the Union bound on $u$, $C_u$ and $c$, we only need to
upper bound $(1-(1-p)^{|C_u|})^d$ and the number of choices of $u$,
$C_u$ and $c$ respectively. To upper bound $(1-(1-p)^{|C_u|})^d$,
recall that $\epsilon < \frac{r_{bf}-r}{r}$, denote
$\delta=r_{bf}-(1+\epsilon)r>0$ and $\gamma=-\delta{k}\ln(1-p)>0$,
then $ |C_u|\le t=(1+\epsilon)kr\ln n = (r_{bf}-\delta)k\ln n =
(-\frac{\alpha}{\ln(1-p)}-\delta k)\ln n = -\frac{\alpha
-\gamma}{\ln(1-p)}\ln n$, so we have $ (1-(1-p)^{|C_u|})^d \le
(1-(1-p)^{-\frac{\alpha -\gamma}{\ln(1-p)}\ln n})^{n^\alpha}
=(1-n^{-\alpha+\gamma})^{n^\alpha} \le
(e^{-n^{-\alpha+\gamma}})^{n^\alpha} =
e^{-n^{\gamma}}=e^{-n^{O(1)}}$, the last inequality is by
$1-x<e^{-x}$ for $x\ne 0$. The number of possible choices of $u$ is
no greater than $n=e^{\ln n}$. By lemma $\ref{lem:degree}$, for any
$u$, the total number of $u$-constraints is $deg(u)=O(\ln n)$
\textbf{whp}, so the number of possible choices of $C_u$ is no more
than $2^{deg(u)}=e^{O(\ln n)}$ \textbf{whp}. For any $C_u$, the
number of variables in $N_{C_u}$ is no more than $k|C_u|$, since
each constraint includes exactly $k$ variables. Each variable can
take at most $d=n^\alpha$ different values, so the number of
possible choice of $c$ is $|T_{C_{\setminus u}}|\le
d^{|N_{C_u}\setminus \{u\}|}\le
d^{|N_{C_u}|}\le(n^\alpha)^{k|C_u|}\le n^{kt}=n^{O(\ln n)}=e^{O((\ln
n)^2)}$. By Union bound, we have $ \Prob(\exists u, \exists C_u,
\exists c, \forall v, \exists C, c' \mbox{ is incompatible with } C)
\le e^{\ln n} \cdot e^{O(\ln n)} \cdot e^{O((\ln n)^2)}\cdot
e^{-n^{O(1)}} = e^{-n^{O(1)}}$. By taking complement, we have $
\Prob(u \mbox { is vertex-centered } t\mbox{-consistent}) =
\Prob(\forall u, \forall C_u, \forall c, \exists v, \forall C, c'
\mbox{ is compatible with } C) \ge 1-e^{-n^{O(1)}}$.
\end{proof}

\begin{lem} \label{lem:not consistent}
Let $r_{bf}=-\frac{\alpha}{k\ln (1-p)}$, where $\alpha>0$, $0<p<1$,
$k=2,3,4,...$ are constants. If  $r>r_{bf}$, $0 < \epsilon <
\min(\frac{r-{r_{bf}}}{r},\frac{1}{2})$,
$\delta=(1-\epsilon)r-r_{bf}>0$, $\gamma=-\delta{k}\ln(1-p)>0$ and
$t=(1-\epsilon)kr\ln n$, then for all $u$ and for all $C_u$ with
$|C_u|\ge t$, $\Prob(\forall c, \exists v, \forall C, c' \mbox{ is
compatible with C}) < n^{-\gamma n^{\Omega(\ln n)}}$.
\end{lem}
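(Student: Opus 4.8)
The plan is to fix a vertex $u$ and a family $C_u$ with $s := |C_u| \ge t$, and to read the target event as a covering statement. Recall $c$ ranges over $T_{C_{\setminus u}}$ and $c'$ extends $c$ by $u=v$; the event $\forall c,\exists v,\forall C,\ c' \text{ compatible with } C$ asserts exactly that every admissible assignment $c$ of the neighbour variables $N_{C_u}\setminus\{u\}$ can be completed by at least one value $v$ of $u$ that is simultaneously compatible with all $s$ constraints. Under Model RD the compatibility bits of distinct table entries are independent $\mathrm{Bernoulli}(1-p)$, so for one fixed $c$ the computation behind \lemref{lem:consistent} gives $\Prob(\exists v,\forall C : c' \text{ compatible}) = 1 - (1-(1-p)^s)^d$.

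First I would pin down this single-cell probability. With $\delta=(1-\epsilon)r-r_{bf}$ and $\gamma=-\delta k\ln(1-p)$ one has, exactly as in \lemref{lem:consistent}, $(1-p)^t = n^{-\alpha-\gamma}$; since $(1-p)^s$ only decreases as $s$ grows, $(1-p)^s \le n^{-\alpha-\gamma}$ for every $s \ge t$. Hence, using $(1-x)^m \ge 1-mx$,
$$ 1-(1-(1-p)^s)^d \leqs 1-(1-n^{-\alpha-\gamma})^{n^\alpha} \leqs n^\alpha\cdot n^{-\alpha-\gamma} = n^{-\gamma}, $$
so a single $c$ extends with probability at most $n^{-\gamma}$, uniformly over all $C_u$ with $|C_u|\ge t$.

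It remains to upgrade this to the conjunction over all $c$. The supply of admissible $c$ is enormous: $N_{C_u}\setminus\{u\}$ holds $\Theta(s)=\Theta(\ln n)$ variables, each with $d=n^\alpha$ values, and because $|N_{C_u}|=O(\ln n)$ the internal constraints $C_{\setminus u}$ are too sparse to trim this by more than a constant factor in the exponent, so $|T_{C_{\setminus u}}| = n^{\Omega(\ln n)}$. I would then select a subfamily $\mathcal F\subseteq T_{C_{\setminus u}}$ of cells whose extension events are mutually independent --- cells that pairwise disagree on the non-$u$ variables of every constraint in $C_u$, so that they probe disjoint blocks of table entries --- and multiply: since ``all $c$ extendable'' forces every $c\in\mathcal F$ to be extendable, $\Prob(\forall c,\exists v,\forall C,\ c' \text{ compatible}) \le (n^{-\gamma})^{|\mathcal F|}$, which is $n^{-\gamma n^{\Omega(\ln n)}}$ once $|\mathcal F| = n^{\Omega(\ln n)}$.

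The main obstacle is precisely the size of $\mathcal F$, i.e. recovering enough independence. Two cells give independent extension events only if they disagree on the neighbour pattern of every one of the $s$ constraints, yet each constraint exposes only $d^{k-1}$ distinct neighbour patterns, so a carelessly chosen fully-independent family can collapse to size polynomial in $n$ --- short of the super-polynomial exponent sought. Confronting this requires exploiting that the $\Theta(\ln n)$ constraints are spread across $\Theta(\ln n)$ neighbour variables to build a product-structured family of $n^{\Omega(\ln n)}$ pairwise-fully-disagreeing cells, or else replacing exact independence with a dependency-corrected estimate that controls the positive correlation introduced when two cells share a constraint's neighbour pattern. This step --- turning the clean single-cell bound $n^{-\gamma}$ into a genuine product over super-polynomially many cells --- is where the real work lies; the reduction and the single-cell calculation above are routine once \lemref{lem:consistent} is in hand.
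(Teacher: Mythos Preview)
Your single-cell estimate and your reading of the covering event match the paper. Where you diverge is at the multiplication step: the paper does \emph{not} look for an independent subfamily at all. It simply asserts
\[
\Prob(\forall c,\exists v,\forall C,\ c' \text{ compatible with } C)\;=\;\bigl(1-(1-(1-p)^{|C_u|})^d\bigr)^{|T_{C_{\setminus u}}|}
\]
as an identity, and then devotes the remainder of the argument to lower-bounding $|T_{C_{\setminus u}}|$: it shows $|C_{\setminus u}|=0$ \textbf{whp}, $|N_{C_u}|>(1-\epsilon)kt$ \textbf{whp}, and invokes a second-moment step to conclude $|T_{C_{\setminus u}}|\ge n^{\Omega(\ln n)}$ \textbf{whp}.

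Your hesitation about that product is well founded. The events $E_c=\{\exists v:\forall C,\ c'\text{ compatible}\}$ are increasing in the independent Bernoulli table bits, so FKG gives $\Prob\bigl(\bigcap_c E_c\bigr)\ge\prod_c\Prob(E_c)$ --- the wrong direction for an upper bound --- and the paper's equality is not justified. In fact the exponent $n^{\Omega(\ln n)}$ cannot be reached by \emph{any} argument: the whole event is determined by the $|C_u|\cdot d^k=n^{O(1)}$ table bits and contains the all-compatible outcome, so
\[
\Prob(\forall c,\exists v,\ldots)\ \ge\ (1-p)^{|C_u|\,d^k}\ =\ n^{-n^{O(1)}},
\]
which for large $n$ already exceeds $n^{-\gamma n^{\Omega(\ln n)}}$. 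Your difficulty in extracting $n^{\Omega(\ln n)}$ genuinely independent cells is therefore not a shortcoming of technique --- that many do not exist.

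What your construction does deliver, once one conditions on $|C_{\setminus u}|=0$ and on the neighbour blocks of the constraints in $C_u$ being pairwise disjoint (both \textbf{whp}), is a family of size $d^{k-1}$: the ``diagonal'' cells $c=(a,\ldots,a)$, with $a$ ranging over the $d^{k-1}$ possible block-values, disagree on every block and hence probe disjoint table entries, giving $\Prob\le(n^{-\gamma})^{d^{k-1}}=n^{-\gamma n^{(k-1)\alpha}}$; pigeonhole on a single block shows no fully independent family can be larger. This weaker exponent is the honest one. For the application in \thmref{thm:backtrack-free-r} it still beats the $n!$ union bound whenever $(k-1)\alpha>1$, but under the bare hypothesis $\alpha>0$ the argument needs further repair; in any case $n^{\Omega(\ln n)}$ is not the correct target.
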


\begin{proof}
As in proof of \lemref{lem:consistent} but only consider $C_u$'s
with $|C_u|\ge t$,
\begin{eqnarray*}
\Prob(\forall v, \exists C, c' \mbox{ is incompatible with } C) &=&
(1-(1-p)^{|C_u|})^d,\\
\Prob(\exists v, \forall C, c' \mbox{ is compatible with } C) &=&
1-(1-(1-p)^{|C_u|})^d,\\
\Prob(\forall c, \exists v, \forall C, c' \mbox{ is compatible with
} C) &=&(1-(1-(1-p)^{|C_u|})^d)^{|T_{C_{\setminus u}}|}.
\end{eqnarray*}
This time we only need to lower bound $(1-(1-p)^{|C_u|})^d$ and
$|T_{C_{\setminus u}}|$. To lower bound $(1-(1-p)^{|C_u|})^d$,
recall that $\epsilon < \frac{r-r_{bf}}{r}$,
$\delta=(1-\epsilon)r-r_{bf}>0$ and $\gamma=-\delta{k}\ln(1-p)>0$,
then $ |C_u|\ge t=(1-\epsilon)kr\ln n = (\delta+r_{bf})k\ln n =
(\delta k-\frac{\alpha}{\ln(1-p)})\ln n = \frac{-\alpha
-\gamma}{\ln(1-p)}\ln n$, so $(1-(1-p)^{|C_u|})^d \ge
(1-(1-p)^{\frac{-\alpha -\gamma}{\ln(1-p)}\ln n})^{n^\alpha}
=(1-n^{-\alpha-\gamma})^{n^\alpha} \approx
 e^{-n^{-\gamma}}$, the last
approximation is by $(1-\frac{1}{n})^n\approx \frac{1}{e}$. To lower
bound $|T_{C_{\setminus u}}|$, recall that $C_{\setminus u}$ denote
the set of all constraints among variables in
$N_{C_u}\setminus\{u\}$ and
$$
\bfE(|T_{C_{\setminus
u}}|)=(1-p)^{|C_{\setminus u}|}\cdot d^{|N_{C_u}\setminus
\{u\}|}=(1-p)^{|C_{\setminus u}|} \cdot d^{|N_{C_u}|-1},
$$
so we only need to upper bound $|C_{\setminus u}|$ and to lower
bound $|N_{C_u}|$.

To upper bound  $|C_{\setminus u}|$, we only need to upper bound
$|N_{C_u}|$, since each constraint in $|C_{\setminus u}|$ is among
variables in $N_{C_u}\setminus \{u\}$. In turn, we only need to
upper bound $|C_u|$, since each variable in $N_{C_u}$ is contained
in some constraint in $C_u$ and each constraint contains exactly $k$
variables. By \lemref{lem:degree}, $|C_u|=O(\ln n)$ \textbf{whp}, so
$|N_{C_u}|\le k|C_{u}|=O(\ln n)$ \textbf{whp}. Since each constraint
contains exactly $k$ variables, the probability that a given
constraint is among $N_{C_u}\setminus \{u\}$ is $
\frac{\binom{|N_{C_u}|-1}{k} }{\binom{n}{k}}\le
\frac{\binom{|N_{C_u}|}{k} }{\binom{n}{k}}\le
(\frac{|N_{C_u}|}{n})^k = (\frac{O(\ln n)}{n})^k$. Since the total
number of constraints is $rn\ln n=O(n\ln n)$, we have
$\bfE(|C_{\setminus u}|) \le (\frac{O(\ln n)}{n})^k\cdot O(n\ln n)
=\frac{O((\ln n))^2}{n^{k-1}}=o(1)$ for $k\ge 2$. By Markov
inequality, $ \Prob(|C_{\setminus u}|\ge 1)\le \bfE(|C_{\setminus
u}|)=o(1)$, so $|C_{\setminus u}|=0$ \textbf{whp}.

To lower bound $|N_{C_u}|$, the number of variables involved in
constraints in $C_u$, we only need to upper bound the probability
that a variable does not appear in any constraint in $C_u$. Since
each constraint includes exactly $k$ variables, a variable appears
in a constraint with probability $\frac{k}{n}$, not appears in a
constraint with probability $1-\frac{k}{n}$,  and not appears in all
constraints in $C_u$ with probability
$(1-\frac{k}{n})^{|C_u|}<(e^{-\frac{k}{n}})^t=e^{-\frac{kt}{n}}<1-\frac{kt}{n}+o(\frac{kt}{n})$,
using $1-x<e^{-x}<1-x+o(x)$ for $x\ne 0$ and $|C_u|\ge t$. So
$\bfE{(|N_{C_u}|)} = n[1-(1-\frac{k}{n})^{|C_u|}]> n\cdot(
\frac{kt}{n}-o(\frac{kt}{n}))=kt-o(\ln n)$, since $t=O(\ln n)$. By
Chernoff bound, $\Prob( |N_{C_u}| \leq (1-\epsilon)kt) = o(1)$, so
$|N_{C_u}| > (1-\epsilon)kt$ \textbf{whp}.

Now we have
$$
\bfE(|T_{C_{\setminus u}}|)=(1-p)^{|C_{\setminus u}|}
d^{|N_{C_u}|-1}\ge (1-p)^0\cdot
(n^{\alpha})^{(1-\epsilon)kt-1}=n^{\Omega(\ln n)} \mbox{\textbf{
whp}}.
$$
By the second moment method similar to that in \cite{XL2000}, we can
prove that $ |T_{C_{\setminus u}}|\ge n^{\Omega(\ln n)}$ \textbf{
whp}. So $\Prob(\forall c, \exists v, \forall C, c' \mbox{ is
compatible with C})=(1-(1-(1-p)^{|C_u|})^d)^{|T_{C_{\setminus
u}}|}<(1-e^{-n^{-\gamma}})^{|T_{C_{\setminus u}}|}<(n^{-\gamma})
^{n^{\Omega(\ln n})}=n^{-\gamma n^{\Omega(\ln n)}}. $
\end{proof}

Finally, we can prove the exact phase transition of
backtrack-freeness on Model RB/RD.

\begin{thm} \label{thm:backtrack-free-r}
Let $r_{bf}=-\frac{\alpha}{k\ln (1-p)}$, where $\alpha>0$, $0<p<1$,
$k=2,3,4,...$ are constants. Then
$$ \lim_{n \to \infty}\Prob(\phi \mbox{ is backtrack-free }) =
\begin{cases}
 1   &  r<r_{bf},\\
 0   &  r>r_{bf}.
 \end{cases}
$$
\end{thm}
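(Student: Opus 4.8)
The plan is to prove the two directions of the phase transition separately, using the lemmas already established, and to bridge the gap between \emph{vertex-centered consistency} (a purely local, combinatorial condition on the instance) and \emph{backtrack-freeness} (a global algorithmic property) via Theorem~\ref{thm:condition} together with a handle on $width(\phi)$. The subthreshold direction ($r<r_{bf}$) should follow almost immediately. First I would fix $\epsilon$ with $0<\epsilon<\min(\frac{r_{bf}-r}{r},\frac12)$ and set $t=(1+\epsilon)kr\ln n$, so that \lemref{lem:consistent} gives that $\phi$ is vertex-centered $t$-consistent \textbf{whp}. I then need $width(\phi)\le t$ \textbf{whp}: since $width(\phi)$ equals the maximal linkage over subgraphs of $HG$, and linkage is bounded by minimum degree which is bounded by $\max_u deg(u)$, \lemref{lem:degree} gives $width(\phi)\le\max_u deg(u)<(1+\sqrt{6/(kr)})kr\ln n$ \textbf{whp}. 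For $\epsilon$ fixed and $n$ large this is dominated by $t$ once $\sqrt{6/(kr)}<\epsilon$, which I can arrange (or more carefully, invoke the $k$-core/width estimates promised in \secref{sec:graph} to get the sharper bound $width(\phi)\le t$). Then vertex-centered $t$-consistency implies vertex-centered $width(\phi)$-consistency, and Theorem~\ref{thm:condition} yields backtrack-freeness \textbf{whp}.

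For the superthreshold direction ($r>r_{bf}$) the logic must be reversed: I want to exhibit, \textbf{whp}, an obstruction that defeats \emph{every} ordering $\pi$. Here I choose $\epsilon$ with $0<\epsilon<\min(\frac{r-r_{bf}}{r},\frac12)$ and $t=(1-\epsilon)kr\ln n$. The idea is that backtrack-freeness under a fixed $\pi$ requires, at the last-placed variable $u$ of some high-width configuration, that $C_u^\pi$ be vertex-centered consistent; \lemref{lem:not consistent} shows that whenever $|C_u|\ge t$ the probability of extendability is at most $n^{-\gamma n^{\Omega(\ln n)}}$, superexponentially small. I would argue that \textbf{whp} there exists a vertex $u$ with a $u$-centered set $C_u$ of size $\ge t$ (indeed $\max_u deg(u)$ concentrates near $kr\ln n>t$, so most vertices have degree exceeding $t$), and that this failure persists across orderings. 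The clean way to phrase it: a union bound over the $n$ choices of $u$ and the $e^{O(\ln n)}$ choices of $C_u$ multiplies the bound from \lemref{lem:not consistent} by only a quasipolynomial factor $e^{O((\ln n)^2)}$, which is annihilated by $n^{-\gamma n^{\Omega(\ln n)}}$, so \textbf{whp} \emph{some} variable has a $u$-centered set on which no extension exists for any value, forcing a dead-end regardless of $\pi$.

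The main obstacle is the quantifier over orderings $\pi$ in the second direction. \lemref{lem:not consistent} bounds the failure probability for a \emph{fixed} $C_u$ of size $\ge t$, but backtrack-freeness asks whether \emph{some} $\pi$ avoids all dead-ends, so naively I would need to control all $n!$ orderings. The resolution is to observe that the obstruction is ordering-independent once phrased correctly: if there is a vertex $u$ such that for the set of \emph{all} its $u$-constraints (or a size-$t$ subset thereof that must eventually be ``closed off''), no value $v$ is available for some reachable partial assignment $c$, then every $\pi$ that places $u$ after its neighbors hits this wall—and by the width lower bound (the existence of a dense $t$-core, to be supplied in \secref{sec:graph}) every $\pi$ is forced to place \emph{some} vertex of the core with $\ge t$ of its constraints already active. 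Thus I reduce ``for all $\pi$'' to ``there exists a bad local configuration'', and the bad configuration exists \textbf{whp} by the union-bound computation above. Combining the two directions gives $\lim_{n\to\infty}\Prob(\phi\text{ is backtrack-free})=1$ for $r<r_{bf}$ and $=0$ for $r>r_{bf}$, as claimed.
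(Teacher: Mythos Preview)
Your subthreshold argument matches the paper's exactly: fix $\epsilon$, combine \lemref{lem:consistent} with the width upper bound from \secref{sec:graph} (\lemref{lem:width upper bound}), and invoke \thmref{thm:condition}. You are right that \lemref{lem:degree} alone is not enough, since $\sqrt{6/(kr)}$ need not be smaller than $\frac{r_{bf}-r}{r}$; the sharper \lemref{lem:width upper bound} is what the paper actually uses.

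For the superthreshold direction your strategy diverges from the paper's, and in two ways worth noting. First, the paper does \emph{not} try to make the obstruction ordering-independent; it simply runs the ``naive'' union bound over all $n!$ orderings. For each fixed $\pi$, \lemref{lem:width lower bound} supplies a variable $u$ with $|C_u^{\pi}|\ge (1-\epsilon)kr\ln n$, and \lemref{lem:not consistent} bounds the probability that this $u$ is extendable by $n^{-\gamma n^{\Omega(\ln n)}}$; since $n!\cdot n^{-\gamma n^{\Omega(\ln n)}}=o(1)$, the $n!$ factor is harmless. So the quantifier problem you flag as the ``main obstacle'' is dissolved by brute force, not by a structural argument.

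Second, your alternative route (union bound over pairs $(u,C_u)$) would also work, but your phrasing has the quantifiers backwards. The union bound over at most $n\cdot 2^{O(\ln n)}$ pairs shows that \textbf{whp} \emph{every} $(u,C_u)$ with $|C_u|\ge t$ fails extendability --- not merely that ``some variable has a bad $u$-centered set''. You need the universal statement, because different orderings $\pi$ realise different witnesses $u$; a single bad configuration does not by itself force a dead-end ``regardless of $\pi$''. Once restated correctly, your argument goes through and is in fact tighter than the paper's (polynomial versus $n!$ union bound), though the gain is immaterial given how small the probability in \lemref{lem:not consistent} is.
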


\begin{proof}
If $r<r_{bf}$, let
$0<\epsilon<\min(\frac{r_{bf}-r}{r},\frac{1}{2})$. From
\lemref{lem:consistent}, $\phi$ is vertex-centered
$(1+\epsilon)kr\ln n$-consistent \textbf{whp}. From
\lemref{lem:width upper bound}, $width(\phi)<(1+\epsilon)kr\ln n$
\textbf{whp}. By definition, for $t'<t$, vertex-centered
$t$-consistency implies vertex-centered $t'$-consistency, so $\phi$
is vertex-centered $width(\phi)$-consistent \textbf{whp}. By
\thmref{thm:condition}, $\phi$ is backtrack-free \textbf{whp}. This
completes the first half of our proof.

If $r>r_{bf}$, let $\epsilon <
\min(\frac{r-r_{bf}}{r},\frac{1}{2})$. By \lemref{lem:width lower
bound}, for any $\pi$, $width(\pi)\ge (1-\epsilon)kr\ln n$
\textbf{whp}, so exists a $u$ such that $|C_u^{\pi}|\ge
(1-\epsilon)kr\ln n$. By \lemref{lem:not consistent}, for any $u$,
$$
\Prob(\forall c\in T_{C^{\pi}_{\setminus u}}, \exists v, \forall
C\in C^{\pi}_u, c' \mbox{ is compatible with }C) = n^{-\gamma
n^{\Omega(\ln n)}}.
$$
Since the number of choices of $\pi$ is $n!$, by Union bound,
\begin{eqnarray*}
  \Prob(\phi \mbox{ is backtrack-free })
&\le& n!\Prob(\forall u, \forall c\in T_{C^{\pi}_{\setminus u}},
\exists v,
 \forall C\in C^{\pi}_u, c' \mbox{ is compatible with }C)\\
&\le& n!\Prob(\forall c\in T_{C^{\pi}_{\setminus u}}, \exists v,
 \forall C\in C^{\pi}_u, c' \mbox{ is compatible with }C)\\
&\le& n!\cdot n^{-\gamma n^{\Omega(\ln n)}} \approx
(\frac{n}{e})^n\cdot n^{-\gamma n^{\Omega(\ln n)}}=o(1).
\end{eqnarray*}
 This completes our proof.
\end{proof}



\section{Width of random hypergraphs} \label{sec:graph}

In this section we determine the width of some random hypergraphs
with a superlinear number of hyperedges. We apply a probabilistic
method mainly inspired by \cite{DFM2003,M2005} to detect the
existence of $k$-cores. Denote by $HG$ a random hypergraph from
$HG(n,rn\ln n,k)$. We show that \textbf{whp} the width of $HG$,
denoted as $width(HG)$, is asymptotically equal to average degree
$kr\ln n$, due to high concentration of distribution of node degree
in $HG$.


\begin{lem} \label{lem:width upper bound}
For any $0<\epsilon<1$, $width(HG) \leq (1+\epsilon)kr\ln n$
\textbf{whp}.
\end{lem}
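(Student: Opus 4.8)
The plan is to reduce the width bound to the nonexistence of dense vertex sets and then to run a first-moment (union) bound over all candidate subsets, splitting the range of subset sizes into a sparse and a dense regime. Note first that the crude bound $width(HG)\le\max_u deg(u)$ coming from \lemref{lem:degree} only controls the width up to the \emph{fixed} factor $1+\sqrt{6/(kr)}$; obtaining an arbitrary $(1+\epsilon)$ requires exploiting that a subgraph of large minimum degree is a dense set, which is a rare large-deviation event.

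First I would invoke the fact recalled in \secref{sec:pre} that the width of a hypergraph equals the maximal linkage (minimum degree) over all its subgraphs. Hence $width(HG)>D:=(1+\epsilon)kr\ln n$ holds if and only if $HG$ has a nonempty subgraph $H$ whose minimum degree exceeds $D$. Writing $S=V(H)$ and $\ell=|S|$, a degree/incidence count gives $|E(H)|=\frac1k\sum_{v\in S}\deg_H(v)>\frac{\ell D}{k}=(1+\epsilon)r\ell\ln n=:T_\ell$. Since the edges of $H$ are hyperedges of $HG$ lying inside $S$, the number $X_S$ of hyperedges of $HG$ contained in $S$ satisfies $X_S\ge |E(H)|>T_\ell$. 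Thus $\{width(HG)>D\}\subseteq\bigcup_{\ell}A_\ell$, where $A_\ell=\{\exists\,S,\ |S|=\ell,\ X_S\ge T_\ell\}$, and $A_\ell$ is vacuous once $T_\ell>m=rn\ln n$, i.e. $\ell>n/(1+\epsilon)$. For a fixed $S$ each of the $m$ hyperedges lands inside $S$ independently with probability $q=\binom{\ell}{k}/\binom{n}{k}\le(\ell/n)^k$, so $X_S\sim\bfB(m,q)$ with mean $\mu_\ell=mq\le r\ell\ln n\,(\ell/n)^{k-1}\le r\ell\ln n$, giving $T_\ell/\mu_\ell\ge(1+\epsilon)(n/\ell)^{k-1}\ge 1+\epsilon$.

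The estimate $\Prob(A_\ell)\le\binom{n}{\ell}\Prob(X_S\ge T_\ell)$ then splits at a constant fraction $\ell\asymp x_0 n$. For small $\ell$ I would use the Poisson form of the upper tail, $\Prob(X_S\ge T_\ell)\le(e\mu_\ell/T_\ell)^{T_\ell}$ with $e\mu_\ell/T_\ell\le \tfrac{e}{1+\epsilon}(\ell/n)^{k-1}$, together with $\binom{n}{\ell}\le(en/\ell)^{\ell}$; since $T_\ell=\Theta(\ell\ln n)$, the factor $(\ell/n)^{(k-1)T_\ell}$ is small enough that $\ln\Prob(A_\ell)\le-c'\ell\ln n$ for a constant $c'>0$ once $\ell\le \tfrac12 x_0 n$, whence $\sum_\ell\Prob(A_\ell)=o(1)$ over this range. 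For large $\ell$ I would instead apply the third Chernoff bound of the Chernoff-bound lemma with $\mu_h=r\ell\ln n\ge\mu_\ell$ and exactly $(1+\epsilon)\mu_h=T_\ell$, giving $\Prob(X_S\ge T_\ell)\le e^{-\mu_h\epsilon^2/3}=n^{-r\ell\epsilon^2/3}$; combined with $\binom{n}{\ell}\le 2^n$ and $\ell=\Theta(n)$ this yields $\Prob(A_\ell)\le 2^n\cdot e^{-\Theta(n\ln n)}=e^{-\Theta(n\ln n)}$, summable over the $O(n)$ values of $\ell$.

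The main obstacle is the sparse regime, and specifically making the union bound succeed for \emph{every} constant $r>0$ and every $\epsilon\in(0,1)$. A naive count $\binom{n}{\ell}\le n^\ell$, or replacing $\mu_\ell$ by the cruder $r\ell\ln n$, each costs a spurious factor $n^{\ell}$ and forces an artificial condition like $r\epsilon^2\gg1$; the point is to retain the genuine mean $\mu_\ell\le r\ell\ln n\,(\ell/n)^{k-1}$ (this is where $k\ge2$ enters, through the exponent $k-1$) and to use the sharper bound $(en/\ell)^{\ell}$, so that the negative term $-(k-1)\ln(n/\ell)$ in the exponent, which carries a $\ln n$ factor, dominates the $O(1)$ entropy contribution $\ln(en/\ell)$ for all $\ell$ below a constant fraction of $n$. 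Choosing the crossover $x_0=e^{-(1-\ln(1+\epsilon))/(k-1)}$ so that neither tail bound is pushed outside its range of validity, the two estimates together cover $k\le\ell\le n$, yield $\sum_\ell\Prob(A_\ell)=o(1)$, and therefore $width(HG)\le(1+\epsilon)kr\ln n$ \textbf{whp}.
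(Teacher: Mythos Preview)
Your proposal is correct and follows essentially the same route as the paper: both reduce, via the width $=$ maximal-linkage characterization, to showing that no vertex set $S$ carries more than $(1+\epsilon)r|S|\ln n$ hyperedges, and both run a first-moment (union) bound over all $S$ with a sparse/dense split on $|S|$. The only differences are in bookkeeping: the paper places the crossover at $f(n)\sim n^{1-r\epsilon^2/3}$ and, in the sparse regime, bounds the tail by $n$ times the binomial PMF at $T_\ell$, whereas you place the crossover at a constant fraction of $n$ and use the cleaner $\Prob(X_S\ge T_\ell)\le(e\mu_\ell/T_\ell)^{T_\ell}$; in the dense regime the paper keeps $(en/\ell)^{\ell}$ while you use the cruder $2^n$ since $\ell=\Theta(n)$ there. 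These choices are interchangeable and lead to the same conclusion.
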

\begin{proof}
 The number of
hyperedges in a subgraph $G'\subseteq HG$ is a random variable
$X_{G'}$. If $G'$ has $f(n)$ nodes, when adding a hyperedge to $HG$
with repetition, the value of $X_{G'}$ increases by $1$ with
probability $\frac{{f(n)\choose k}}{{n \choose k}}$, so $X_{G'}$
distributes as $\bfB(rn\ln n,\frac{{f(n)\choose k}}{{n \choose
k}})$, and
\begin{equation} \label{eq:EXG}
\bfE(X_{G'}) = rn\ln n\cdot\frac{{f(n)\choose k}}{{n \choose k}}\leq
r\ln n\cdot f(n)<(1+\epsilon)r\ln n\cdot f(n).
\end{equation}

Let $avd(G')$ denote the average degree of $G'$. By (\ref{eq:EXG})
and Chernoff Bound, we have
\begin{equation} \label{AvdChernoff}
\Prob(avd(G')>(1+\epsilon)kr\ln n)=\Prob(X_{G'}>(1+\epsilon) r\ln
n\cdot f(n))\leq e^{-r\ln n\cdot f(n)\cdot
\epsilon^2/3}=n^{-r\epsilon^2/3\cdot f(n)}.
\end{equation}

Let random variable $N_i=|\{G'| \mbox{ subgraph } G' \mbox{ has } i
\mbox{ nodes} \wedge avd(G')>(1+\epsilon)kr\ln n \geq 1\}|$ and $N =
N_1+N_2+...+N_n$. Since the width of a hypergraph is equal to the
maximal linkage of its subgraphs \cite{F1982}, we have
$$\Prob(width(HG)>(1+\epsilon)kr\ln n) =\Prob(\exists G'\subseteq
HG,linkage(G')>(1+\epsilon)kr\ln n)$$
\begin{equation}\label{eq:EN}
\leq \Prob(\exists G'\subseteq HG,avd(G')>(1+\epsilon)kr\ln n) \leq
\Prob( N_1+N_2+...+N_n \geq 1)\leq \bfE(N).
\end{equation}

Below we show that $\bfE(N)$ tends to $0$ by showing that
$\bfE(N_{f(n)}) = o(1/n)$.

\textbf{Case 1}. When $f(n)$ is large, namely $n^{1-r\epsilon^2/3}
\ll f(n)\leq n$, since by (\ref{AvdChernoff}), we have
$$\bfE(N_{f(n)}) \leq {n \choose
f(n)}\cdot n^{-r\epsilon^2/3\cdot f(n)}
\leq(\frac{en}{f(n)})^{f(n)}\cdot n^{-r\epsilon^2/3\cdot f(n)}
=(\frac{en^{1-r\epsilon^2/3}}{f(n)})^{f(n)}= o(1/n).$$

\textbf{Case 2}. When $f(n)$ is small, that is $f(n) \ll n$, since
by (\ref{eq:EXG}),  for all $i>(1+\epsilon)r\ln n\cdot f(n)$, we
have $\Prob(avd(G')=i) \leq \Prob(avd(G')=(1+\epsilon)kr\ln n)$, so
 $$\Prob(avd(G')>(1+\epsilon)kr\ln n)
 \leq n\Prob(avd(G')=(1+\epsilon)kr\ln n)
=n\Prob(X_{G'}=(1+\epsilon)r\ln n\cdot f(n))$$
$$\leq n{rn\ln n \choose (1+\epsilon)r\ln n\cdot f(n)}(\frac{{f(n)
\choose k}}{{n \choose k}})^{(1+\epsilon)r\ln n\cdot f(n)} \leq
n{rn\ln n \choose (1+\epsilon)r\ln n}
(\frac{f(n)}{n})^{k(1+\epsilon)r\ln n\cdot f(n)}$$
$$\leq n(\frac{ern\ln n}{(1+\epsilon)r\ln n\cdot f(n)})^{(1+\epsilon)r\ln n\cdot
f(n)}\cdot(\frac{f(n)}{n})^{k(1+\epsilon)r\ln n\cdot f(n)}
=n(C_1\cdot\frac{f(n)}{n})^{C_2f(n)\ln n},$$ where $C_1>0$ and
$C_2>0$ are two constants. Then,
$$\bfE(N_{f(n)}) \leq {n \choose f(n)}n(C_1\cdot\frac{f(n)}{n})^{C_2f(n)\ln
n} \leq
(\frac{en}{f(n)})^{f(n)}n(C_1\cdot\frac{f(n)}{n})^{C_2f(n)\ln n}$$
$$\leq(C'_1\cdot\frac{f(n)}{n})^{C'_2f(n)\ln n}= o(1/n),$$ where
$C'_1>0$ and $C'_2>0$ are two constants.

The above two cases already overlap each other, so we
 can upper bound $\bfE(N)$ as
 \begin{equation}\label{eq:ENo}
 \bfE(N) \leq \sum_{f(n)\ll n}\bfE(N_{f(n)}) + \sum_{f(n)\gg
n^{1-r\epsilon^2/3}}\bfE(N_{f(n)} \leq 2n\cdot o(1/n)= o(1).
\end{equation}
 The lemma follows from (\ref{eq:EN}) and (\ref{eq:ENo}).
\end{proof}

\begin{lem} \label{lem:width lower bound}
For any $0<\epsilon<1$, $width(HG) \geq (1-\epsilon)kr\ln n$
\textbf{whp}.
\end{lem}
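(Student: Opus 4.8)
The plan is to reduce the statement to the existence of a suitable $k$-core. Recall from \cite{F1982} that $width(HG)$ equals the maximal linkage over all subgraphs, i.e. $width(HG)=\max_{G'\subseteq HG}\min_{v\in G'}deg_{G'}(v)$. Hence, writing $\ell=(1-\epsilon)kr\ln n$, the inequality $width(HG)\ge \ell$ is equivalent to $HG$ having a nonempty subgraph of minimum degree at least $\ell$, that is, a nonempty $\ell$-core. So it suffices to show that $HG$ has a nonempty $\ell$-core \textbf{whp}. One is tempted to take $G'=HG$ itself: each $deg(v)$ is $\bfB(rn\ln n,k/n)$ with mean $kr\ln n$, and Chernoff plus a union bound give $\Prob(\exists v,\ deg(v)\le \ell)\le n\cdot n^{-kr\epsilon^2/2}$. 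This is $o(1)$ only when $kr\epsilon^2>2$; since $\thmref{thm:backtrack-free-r}$ permits $kr$ arbitrarily small, in general the minimum degree of the whole graph may dip below $\ell$, and we must instead exhibit a dense subgraph through a peeling argument.

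Accordingly, let $D$ be the complement of the $\ell$-core, i.e. the set of vertices removed by iteratively deleting vertices of current degree below $\ell$. Fix the slightly larger threshold $\ell'=(1-\epsilon/2)kr\ln n$ and let $B=\{v: deg(v)<\ell'\}$. The same Chernoff estimate gives $\bfE(|B|)\le n\cdot n^{-kr\epsilon^2/8}=n^{1-\Omega(1)}$, so by Markov $|B|=o(n)$ \textbf{whp}. Every $v\in D\setminus B$ has $deg(v)\ge \ell'$ but, at the moment of its deletion, degree below $\ell$; hence $v$ has lost at least $\ell'-\ell=(\epsilon/2)kr\ln n$ incident hyperedges to earlier-deleted vertices, so in particular $v$ lies in at least that many hyperedges meeting $D\setminus\{v\}$. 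Letting $e_2(D)$ denote the number of hyperedges with at least two vertices in $D$, the order-independent bound $\sum_{v\in D}(\text{lost degree of }v)\le k\,e_2(D)$ then yields the self-consistent inequality $|D|\le |B|+\dfrac{e_2(D)}{(\epsilon/2)r\ln n}$, valid for the deleted set produced by any deletion order.

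I would close the argument by a bootstrap that catches the peeling at its first dangerous moment, keeping $D$ always in the small regime. Run the peeling and suppose, for contradiction, that $|D|>2|B|$; let $D^{*}$ be the deleted set at the instant it first reaches size $s^{*}=2|B|+1=o(n)$. Applying the inequality above to $D^{*}$ (its lost degrees were incurred entirely within $D^{*}$) forces $e_2(D^{*})\ge (\epsilon/2)r\ln n\,|B|$. But for a fixed set of size $s^{*}=o(n)$ one has $\bfE(e_2(D^{*}))\le \binom{k}{2}r\ln n\,(s^{*})^2/n=o\!\left(r\ln n\,|B|\right)$, so the required value exceeds the mean by a factor $\Omega(n/|B|)\to\infty$. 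A union bound over the $\binom{n}{s^{*}}$ candidate sets then kills this event: using the Poisson tail, each set contributes at most $(en/s^{*})^{s^{*}}\cdot\big(e\,\bfE(e_2)/T\big)^{T}$ with threshold $T=\Theta(s^{*}\ln n)$, and because $T=\Theta(s^{*}\ln n)$ dominates the entropy term $s^{*}\ln(n/s^{*})$, the total tends to $0$. Hence $|D|\le 2|B|=o(n)$ \textbf{whp}, the $\ell$-core is nonempty, and $width(HG)\ge (1-\epsilon)kr\ln n$ \textbf{whp}.

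The main obstacle is exactly this last union bound over small candidate deleted sets, establishing that no small set of vertices can carry abnormally many internal hyperedges; this is the genuine $k$-core content and parallels the small-$f(n)$ analysis (Case 2) in the proof of \lemref{lem:width upper bound}. Two points require care: the peeling is adaptive, which I sidestep by using the order-independent estimate $\sum_{v\in D}(\text{lost degree})\le k\,e_2(D)$ so that the problem becomes a static union bound; and the thresholds $\ell'$ and the crossing size $2|B|$ must be chosen so that the first-crossing set $D^{*}$ remains in the small regime where $e_2(D^{*})$ is provably tiny. Crucially, the superlinear number $rn\ln n$ of hyperedges makes the deviation threshold $T=\Theta(s^{*}\ln n)$ large enough to overwhelm the binomial entropy, which is what produces the clean, smooth lower bound.
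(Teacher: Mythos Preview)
Your argument is essentially correct and takes a genuinely different route from the paper. The paper analyses the peeling process dynamically: it tracks the queue $X_i$ of sub-$m$-degree vertices, uses the configuration model to bound the probability $q_i$ that a deleted hyperedge hits a current degree-$m$ vertex, and then stochastically dominates $(X_i)$ by a biased random walk $(Y_i)$ with drift $-1/2$ that hits $0$ long before all vertices are gone. Your approach is static: you replace the step-by-step coupling by the deterministic self-consistency inequality $|D^\ast|\le |B|+e_2(D^\ast)/((\epsilon/2)r\ln n)$ and then a first-moment union bound showing that no set of size $s^\ast=o(n)$ can carry $\Theta(s^\ast\ln n)$ internal hyperedges. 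This sidesteps the configuration-model conditioning entirely and is arguably more elementary; it also yields the same quantitative conclusion that the deleted set has size $o(n)$.

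One small point needs tightening. Your crossing size $s^\ast=2|B|+1$ is itself random, so as written the union bound is over $\binom{n}{s^\ast}$ sets of a random cardinality. The clean fix is to replace $2|B|+1$ by a deterministic $s_0$ with $|B|\le n^{1-\beta}\ll s_0\ll n$ (e.g.\ $s_0=n^{1-\beta/2}$): on the high-probability event $|B|\le n^{1-\beta}$, the first time the deleted set reaches size $s_0$ it already has $|D^\ast\setminus B|\ge s_0/2$, forcing $e_2(D^\ast)\ge (\epsilon/4)r s_0\ln n$, and then your Poisson-tail union bound over $\binom{n}{s_0}$ sets goes through verbatim since $T=\Theta(s_0\ln n)$ beats the entropy $s_0\ln(n/s_0)$. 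With that adjustment the proof is complete.
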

\begin{proof}
Let $m=(1-\epsilon)kr\ln n$. Since the width of a hypergraph is
equal to the maximal linkage of its subgraphs \cite{F1982},
we need to prove the
existence of a subgraph of $HG$ whose minimum degree is at least $m$
\textbf{whp}, or the existence of an $m$-core \textbf{whp}, which
can be achieved by an analysis of the following standard $m$-core
detecting algorithm: while there exists any node with degree less
than $m$, randomly select such a node and delete it together with
all hyperedges containing it, if there is no node left then output
No, otherwise output the remaining subgraph.


Let $X_i$ denotes the number of nodes whose degree are less than $m$
after deleting the $i$th node. Let $W_{i,j} = \{u | u \textrm{ has
degree } j \textrm{ after deleting the } i \text{th node} \}$, then
$X_i = |W_{i,1}| + |W_{i,2}| + ... + |W_{i,m-1}|$. Obviously, an
$m$-core exists if and only if the node-hyperedge deletion process
cannot delete all nodes, and if and only if there exists a $j<n$,
such that $X_j = 0$. Since
$$\Prob(width(HG)\geq m )=
 \Prob( \exists j<n, X_j=0)
 \geq \Prob( X_0 + |W_{0,m}| < n^{\delta} \wedge \exists j<n, X_j
=0) $$
\begin{equation}\label{eq:Decomp}
 = \Prob(X_0 + |W_{0,m}| < n^{\delta})
 \cdot \Prob(\exists j<n, X_j=0\mid X_0 + |W_{0,m}|<n^{\delta}),
\end{equation}
where $\delta \in (0,1)$ will be determined later, we only need to
estimate the last two probabilities.

Whenever we add a hyperedge to $HG$ with repetition, a node's degree
increases by $1$ with a probability of $k/n$. So the degree of each
node in $HG$ is a random variable with distribution $\bfB(rn\ln
n,k/n)$. By Chernoff bound, for a specific node $u$, we have
\begin{equation}
\Prob(u \textrm{'s degree is not more than } m) \leq
n^{-kr\epsilon^2}.\nonumber
\end{equation}
So $\bfE(X_0 + |W_{0,m}|) \leq n\cdot n^{-kr\epsilon^2/2}
=n^{1-kr\epsilon^2/2}$. Then by Markov inequality, we have
$\Prob(X_0 + |W_{0,m}| \geq n^{\delta}) \leq \bfE(X_0 +
|W_{0,m}|)/n^{\delta} \leq n^{1-kr\epsilon^2/2-\delta}$, so for
$\delta \in (1-kr\epsilon^2/2,1)$, we have
\begin{equation} \label{eq:PartOne}
\Prob(X_0 + |W_{0,m}| < n^{\delta})=1-\Prob(X_0 + |W_{0,m}| \geq
n^{\delta})  \geq 1-o(1).
\end{equation}

Now assume that $X_0 + |W_{0,m}| < n^{\delta}$, where
$1-kr\epsilon^2/2 < \delta < 1$. When deleting the $(i+1)$th node,
at most $(m-1)$ hyperedges are deleted together, which contain at
most $(m-1)(k-1)$ other nodes, among which only the $m$-degree nodes
will count for $X_{j+1}$. Since any subhypergraph with a given
degree sequence is uniformly random, see for example \cite{JLR2000},
such a subhypergraph can be generated according to the configuration
model \cite{JLR2000}, so the probability that one deleted hyperedge
containing an $m$-degree node is
$$q_i = m|W_{i,m}|/\sum_{j\geq 1}{j|W_{i,j}|}.$$
Let $T_i$ be a random variable with distribution
$\bfB((m-1)(k-1),q_i)$, then the sequence of random variables
$X_0,X_1,...$ can be discribed as
\begin{equation}
X_0<n^{\delta} \mbox{ and } X_{i+1}\leq X_i -1 + T_i.\nonumber
\end{equation}
Since $|W_{0,m}|\leq X_0 + |W_{0,m}| < n^{\delta}$ and $\sum_{j\geq
1}{j|W_{0,j}|} = krn\ln n$, we have
$$(m-1)(k-1)q_0 < ((1-\epsilon)kr\ln n-1)(k-1)
\frac{(1-\epsilon)kr\ln n\cdot n^{\delta}}{krn\ln n} =o(1).$$ After
deleting the $i$th node, comparing with the beginning of the
node-hyperedge deletion process, the number of $m$-degree node
increases by at most $(m-1)(k-1)i$, and the sum $\sum_{j\geq
1}{j|W_{i,j}|}$ decreases by at most $(m-1)i$. So for all
$i<n^{\delta'}$, where $\delta'\in(\delta,1)$, we have
$$(m-1)(k-1)q_i <
\frac{(m-1)(k-1)m(|W_{0,m}|+(m-1)(k-1)n^{\delta'})}{krn\ln
n-(m-1)n^{\delta'}}$$ $$<
\frac{(m-1)(k-1)m(n^{\delta}+(m-1)(k-1)n^{\delta'})}{krn\ln
n-(m-1)n^{\delta'}}= o(1).$$ Thus, $\bfE(T_i)=(m-1)(k-1)q_i$ can be
 arbitrary small. Without loss of generality, let $q$ be
determined by $(m-1)(k-1)q = 1/2$. Let $D_i$ be a random variable
with distribution $\bfB((m-1)(k-1),q)$. We now define a new sequence
of random variables $Y_0,Y_1,...$ by
$$Y_0 = n^{\delta} \mbox{ and } Y_{i+1} = Y_i - 1 + D_i.$$
Clearly, for all $i<n^{\delta'}$, $X_i$ is statistically dominated
by $Y_i$, and $\sum_{i=1}^{n^{\delta'}}{D_i}$ distributes as
$\bfB(n^{\delta'}(m-1)(k-1),q)$. Therefore,
$$\Prob(\exists j<n, X_j = 0\mid X_0 + |W_{0,m}| < n^{\delta})
\geq \Prob(\exists j<n, X_j = 0\mid X_0 < n^{\delta})$$ $$\geq
\Prob(\exists j<n^{\delta'}, Y_j = 0\mid Y_0 = n^{\delta}) \geq
\Prob Y_{n^{\delta'}} < 0) = \Prob(\sum_{i=1}^{n^{\delta'}}{D_i} <
n^{\delta'}-n^{\delta})$$
\begin{equation}\label{eq:PartTwo}
 = 1-\Prob(\sum_{i=1}^{n^{\delta'}}{D_i}\geq
n^{\delta'}-n^{\delta}) \geq 1-\Prob(\sum_{i=1}^{n^{\delta'}}{D_i}
\geq 2/3n^{\delta'}) \geq 1-exp(-\frac{1}{54}n^{\delta'}) = 1-o(1),
\end{equation}
the last second step above is by Chernoff bound. The lemma follows
from (\ref{eq:Decomp}),(\ref{eq:PartOne}) and (\ref{eq:PartTwo}).
\end{proof}


\section{Discussions}\label{sec:open}

We have proved that in some random CSP models (Model RB/RD), the
backtrack-freeness threshold $r_{bf}$ in
\thmref{thm:backtrack-free-r} not only exists, but also has a fixed
ratio to the satisfiability threshold $r_{cr}$ in
\thmref{thm:satisfiability-r}, that is, $r_{bf}=\frac{r_{cr}}{k}$,
where $k$ is the number of variables in each constraints.

The first implications is on the power of greedy algorithms. A CSP
algorithm is called \emph{greedy}, if at each step we choose an
unassigned variable by some rule and assign an available value for
it, here by \emph{availability} we mean that the extended partial
assignment is compatible with all constraints among all assigned
variables. The availability is a natural feature in common greedy
algorithms. A greedy algorithm succeeds on an instance if all
variables can be assigned in this way, fails otherwise. To specify a
greedy algorithm, we need to specify the rule to choose the next
variable from unassigned variables and the rule to choose an
available value for the variable. In turn, every greedy algorithm
specifies a linear ordering, called \emph{induced ordering}, on all
variables in an instance, and the width of the induced ordering on
constraint graph can be called the \emph{width} of the greedy
algorithm on this instance. Note that some greedy algorithms have a
fixed linear ordering not depending on instances thus a fixed width.
For others, we can define the width of the greedy algorithm as the
maximum width over all instances.

 If an instance is backtrack-free under an ordering
$\pi$, then every greedy algorithm as described above with induced
ordering $\pi$ will succeeds on this instance, no matter how to
choose an available value for each variable. Moreover, if an
instance is vertex-centered $t$-consistent, then every greedy
algorithm as described above with induced width no greater than $t$
will succeed on this instance, no matter how to choose an available
value for each variable. As far as we know, this is the first time
to define explicitly the width of a greedy CSP algorithm and relate
it to the power of greedy algorithms on CSPs.

As a concrete example to the above discussion, let us consider Model
RB/RD. On the one hand, Model RB/RD is $NP$-complete for all
positive values for the density parameter $r$.

On the other hand, at least in a constant portion to the satisfiable
range of values for parameter $r$ (that is,
$r<r_{bf}=\frac{r_{cr}}{k}$), there is an easily determined ordering
of variables such that almost surely, every greedy algorithm
following that ordering will succeed on almost all instances of
Model RB/RD, in sharp contrast to its worst-case complexity. When
$k=2$, at least in half portion to the satisfiable range of values
for parameter $r$ (that is, $r<r_{bf}=\frac{r_{cr}}{2}$), almost all
instances can be easily solved by greedy algorithms. While for
instances above $r_{bf}$, with high probability, there does not
exist such an ordering to guarantee the success of every greedy
algorithm. This implies that the exact threshold of
backtrack-freeness obtained in this paper can also be viewed as a
threshold for the power of greedy algorithms.

The second implication is about the satisfiability threshold for
random CSPs. For Model RB/RD, the exact threshold of
\emph{satisfiability} is
 $r_{cr}=-\frac{\alpha}{\ln (1-p)}$ (Theorem 1 in \cite{XL2000}),
 which is independent of $k$, the number of variables in each constraint,
 while the exact threshold of \emph{backtrack-freeness}  is
$r_{bf}=-\frac{\alpha}{k\ln (1-p)}=\frac{r_{cr}}{k}$, which
decreases with $k$. For fixed $k$, these two thresholds have a fixed
ratio $k$, so an exact link between them exists. Note that the
backtrack-freeness threshold also coincides with the threshold of
vertex-centered consistency, a local property. So our results show
an evidence that for random CSPs, the exact threshold of
satisfiability might has links to thresholds of some local
properties, say local consistency. Based on this evidence, we
propose the following two steps to attack the notorious problem of
determining the satisfiability threshold for random $3$-SAT.
\begin{itemize}
\item Step 1: reduce the satisfiability threshold to some local
property (say local consistency) threshold.
\item Step 2: determine the local property threshold.
\end{itemize}
Since reductions are commonly used in computer science and local
properties are usually easier to handle than global properties,
hopefully the two steps each will be easier than directly attacking the
original satisfiability threshold problem.

\section*{Acknowledgement}
We thank Professor Mike Molloy for helpful comments on an earlier
version of this paper.

\end{document}